\documentclass[3p,12pt]{elsarticle}

\usepackage[utf8]{inputenc}
\usepackage[T1]{fontenc}
\usepackage{amsmath, amssymb, amsthm, bm}
\usepackage{graphicx}
\usepackage{booktabs}
\usepackage{hyperref}
\usepackage{algorithm}
\usepackage{algorithmic}
\usepackage{enumitem}
\usepackage{multirow}

\newtheorem{proposition}{Proposition}
\newtheorem{remark}{Remark}

\journal{Expert Systems with Applications}

\begin{document}

\begin{frontmatter}

\title{Variance-Preserving Orthogonal Selection (VPOS): Greedy Feature Selection via Orthogonal Deflation in Principal-Component Loading Space}

\author[1]{Baran Koseoglu\corref{cor1}}
\ead{koseoglubaran@gmail.com}
\cortext[cor1]{Corresponding author.}

\author[2]{Berrin Yanikoglu}
\ead{berrin@sabanciuniv.edu}

\address[1]{Independent Researcher, London, UK}
\address[2]{Sabanci University, Istanbul, Turkey}

\begin{abstract}
We present Variance-Preserving Orthogonal Selection (VPOS), an unsupervised feature-selection method that performs sequential orthogonal deflation in the variance-weighted principal component analysis (PCA) loading space $\mathbf{V}_d\mathbf{\Lambda}_d^{1/2}$. After each feature is selected, its loading direction is projected out of all remaining candidates, so subsequent selections cover complementary directions of the rank-$d$ covariance approximation while returning original variables. We establish rank-reduction guarantees and a determinant-growth interpretation, and distinguish VPOS from greedy selection on raw data, unweighted eigenvector pivoting, Principal Feature Analysis (PFA), and Principal Variable Selection (PVS). Experiments enforce $k\leq d$, tune method-specific parameters on validation observations, and evaluate on unseen outer folds. Across seven labelled benchmarks, VPOS improves held-out normalised reconstruction error over matched PCA without deflation on every dataset, with reductions of 1--78\%. It obtains the lowest mean reconstruction error on Wine, Breast Cancer, and MNIST and is within 1.7\% of the lowest error on CIFAR-10 and HighDim. On CIFAR-10, VPOS is approximately 24$\times$ faster than the closely related PVS baseline while incurring a 1.7\% reconstruction gap. These results establish VPOS as an efficient covariance-coverage method, particularly when correlated high-dimensional data must be represented by a small set of identifiable original variables.
\end{abstract}

\begin{keyword}
unsupervised feature selection \sep principal component analysis \sep orthogonal deflation \sep covariance approximation \sep greedy algorithms \sep column subset selection
\end{keyword}

\end{frontmatter}

\section{Introduction}
Dimensionality reduction transforms high-dimensional data into a lower-dimensional space by extracting meaningful patterns or selecting the most relevant features. Dimensionality reduction is broadly categorised into feature selection and feature extraction. Feature selection selects a subset of unchanged input features according to predefined evaluation metrics, while feature extraction applies mathematical transformations to the high-dimensional data and transforms it into a compressed, lower-dimensional space while maximising the retention of critical variance. We focus on \textit{unsupervised feature selection}: selecting a discrete subset of original features based solely on the intrinsic variance structure of the data, without access to class labels. This setting excludes supervised methods (wrapper and embedded approaches) that require a target variable, as well as feature extraction methods that produce linear combinations rather than original features. When the number of features $D$ far exceeds the number of observations $N$, data tends to cluster on a low-dimensional manifold \cite{bellman1957dynamic}. Principal component analysis (PCA) recovers that manifold efficiently, but its components are dense linear combinations of all original variables \cite{jolliffe2002pca}. In genomics, finance, and clinical measurement, the goal is not a rotated projection but the identification of a small, interpretable set of original variables — specific genes, assets, or biomarkers — that capture the essential variance structure. Gene-expression microarrays are a canonical example: thousands of genes are measured simultaneously, but only a handful of pathway-level signals drive biological variation, and practitioners need to identify the specific genes to target in follow-up assays \cite{alon1999colon}.

Unsupervised feature selection fills this role, yet it faces a structural difficulty \cite{li2017survey}. Independent filters such as marginal variance and Laplacian Score \cite{he2005laplacian} can select a redundant top-$k$ set under strong multicollinearity, with several chosen features corresponding to the same dominant latent factor while secondary signals are missed. Multivariate methods such as Multi-Cluster Feature Selection (MCFS) \cite{cai2010mcfs} mitigate this problem through joint sparse regression, but they do not explicitly impose sequential orthogonal covariance coverage. We refer to the failure mode of redundant independent ranking as the \textit{clustering trap}.

A separate line of work sidesteps discrete selection by placing $\ell_1$ penalties on PCA loadings \cite{zou2006spca, jolliffe2003modified}. These methods produce sparse component weights, not a discrete feature index set. In applications where practitioners need to physically act on selected variables — constructing a portfolio, running a targeted assay — mixing originals is not acceptable.

We propose \textbf{Variance-Preserving Orthogonal Selection (VPOS)}, an unsupervised feature-selection method that applies sequential orthogonal deflation to the rows of the variance-weighted loading matrix $\mathbf{B}=\mathbf{V}_d\mathbf{\Lambda}_d^{1/2}$. At each step, the feature whose loading row has the largest residual norm is selected, and its direction is projected out of all remaining candidates, so subsequent selections cover complementary covariance directions. Operating on the validation-truncated covariance factor concentrates the selection on the principal variance structure while retaining a discrete subset of the original variables. Among the many possible configurations of greedy orthogonal selection --- operating on raw data, unweighted eigenvectors, normalised columns, or the full covariance --- experiments show that this specific combination of variance weighting, PCA truncation, and validation-based rank selection yields a favourable speed--accuracy tradeoff.

The contributions of this work are:
\begin{enumerate}[leftmargin=*, label=(\roman*)]
    \item The \textbf{Weighted Loading Embedding} $\mathbf{B} = \mathbf{V}_d \mathbf{\Lambda}_d^{1/2}$ is formalised as the natural geometric space for feature diversity measurement, with each row's squared norm equal to the feature's contribution to the $d$-dimensional principal subspace.
    \item The loading-space deflation is shown to admit a formal connection to column-pivoted QR (CPQR) on $\mathbf{B}^{\top}$, yielding determinant-growth and rank-reduction interpretations together with explicit conditions on $k$, $d$, and the retained rank.
    \item The effect of the operating matrix is isolated empirically against greedy orthogonal selection on raw $\mathbf{X}$, on unweighted $\mathbf{V}_d$, Principal Feature Analysis (PFA), and the closely related Principal Variable Selection (PVS) method of Skogholt et al.\ \cite{skogholt2023pvs}.
    \item Experiments across eight datasets use validation-only tuning, enforce $k\leq d$, and evaluate reconstruction and classification on observations excluded from selection and tuning. The design uses nested outer-fold evaluation for every labelled dataset, with five folds for smaller datasets and three folds for MNIST and CIFAR-10.
\end{enumerate}

\section{Problem Formulation}

\subsection{Matrix Representation and Covariance}
We consider a dataset $\mathbf{X} \in \mathbb{R}^{D \times N}$ where $D$ denotes the total number of input features and $N$ denotes the number of observations. Throughout, $k$ is the selection budget---the number of original features returned by VPOS---whereas $d$ is the retained PCA rank used to construct the internal loading representation. We adopt the $D \times N$ convention (features as rows, observations as columns) to align with signal processing and covariance analysis standards \cite{hyvarinen2001ica}. We assume the data is centred such that $\sum_{i=1}^N \mathbf{x}_i = \mathbf{0}$. The global structure of the feature space is encoded in the feature-covariance matrix $\mathbf{\Sigma} \in \mathbb{R}^{D \times D}$:
\begin{equation}
\mathbf{\Sigma} = \frac{1}{N-1} \mathbf{X} \mathbf{X}^\top
\end{equation}
This covariance matrix captures all pairwise linear relationships between features and provides the substrate for the loading embedding that VPOS operates on.

\subsection{The Geometry of the Loading Space}
The spectral decomposition of $\mathbf{\Sigma}$ is given by:
\begin{equation}
\mathbf{\Sigma} = \mathbf{V} \mathbf{\Lambda} \mathbf{V}^\top
\end{equation}
where $\mathbf{V} = [\mathbf{v}_1, \mathbf{v}_2, \dots, \mathbf{v}_D]$ is the orthonormal matrix of eigenvectors (loadings) and $\mathbf{\Lambda} = \text{diag}(\lambda_1, \lambda_2, \dots, \lambda_D)$ is the diagonal matrix of eigenvalues sorted in descending order ($\lambda_1 \ge \lambda_2 \ge \dots \ge \lambda_D \ge 0$).

We define the \textbf{Weighted Loading Embedding} $\mathbf{B} \in \mathbb{R}^{D \times d}$ for the top $d$ principal components as:
\begin{equation}
\mathbf{B} = \mathbf{V}_d \mathbf{\Lambda}_d^{1/2}
\end{equation}
where $\mathbf{V}_d \in \mathbb{R}^{D \times d}$ contains the top-$d$ eigenvectors and $\mathbf{\Lambda}_d = \text{diag}(\lambda_1, \dots, \lambda_d)$. Each row $\mathbf{b}_j \in \mathbb{R}^d$ of $\mathbf{B}$ serves as a low-dimensional representation of feature $j$ in the principal subspace. The magnitude $\|\mathbf{b}_j\|^2 = \sum_{r=1}^d \lambda_r v_{jr}^2$ corresponds to the variance that feature $j$ contributes to the $d$-dimensional principal subspace, recovering the weighted sum of squared loadings familiar from factor analysis \cite{thurstone1947fa}. Crucially, redundancy between features is geometrically represented by the \textit{angular proximity} of their corresponding row vectors in this $d$-dimensional loading space: two features that are near-collinear in the data space will have nearly parallel row vectors in $\mathbf{B}$.

\subsection{The Clustering Trap in Static Selection}
Independent ranking methods assign a scalar score to each feature and select the top-$k$ features. Under extreme multicollinearity---where a dominant latent factor drives many features---criteria such as marginal variance and Laplacian Score \cite{he2005laplacian} can concentrate the selected set on that factor. This is the \textit{clustering trap}: the selected subset is redundant, and lower-variance but independent latent factors can be missed. VPOS addresses this failure mode by making the selection criterion adaptive: the score of each candidate is updated after every pivot to reflect covariance variation not yet covered by the selected loading rows.

\section{Methodology: Orthogonal Deflation}

\subsection{Information Draining via Null-Space Projection}
Unlike one-shot ranking and clustering approaches such as variance filtering and Principal Feature Analysis (PFA) \cite{lu2007pfa}, VPOS makes a feature's score conditional on preceding pivots. Upon selecting feature $i_t$ at iteration $t$, we extract its loading vector $\mathbf{u} = \mathbf{b}_{i_t}^{(t)} \in \mathbb{R}^d$. To remove the represented direction from the remaining candidates, we define the orthogonal projection operator onto the complement of $\text{span}(\mathbf{u})$:
\begin{equation}
\mathbf{P}_{\mathbf{u}}^{\perp} = \mathbf{I}_d - \frac{\mathbf{u} \mathbf{u}^\top}{\mathbf{u}^\top \mathbf{u}}
\end{equation}
The loading embedding is updated via right-multiplication (deflation along the component axis):
\begin{equation}
\mathbf{B}^{(t+1)} = \mathbf{B}^{(t)} \mathbf{P}_{\mathbf{u}}^{\perp}
\end{equation}
For any remaining candidate feature $j$, its updated representation is:
\begin{equation}
\mathbf{b}_j^{(t+1)} = \mathbf{b}_j^{(t)} - \frac{\mathbf{b}_j^{(t)\top} \mathbf{u}}{\|\mathbf{u}\|^2} \mathbf{u}
\end{equation}
which is the component of $\mathbf{b}_j^{(t)}$ orthogonal to $\mathbf{u}$. Features that were highly correlated with $i_t$ (i.e., nearly parallel to $\mathbf{u}$ in loading space) will have their residual representation collapse toward zero, sharply reducing their score in the next iteration. Features that were already orthogonal to $i_t$ are unaffected.

\subsection{Incremental Orthogonal Variance Score}
At step $t$, the algorithm greedily maximises the residual row-norm score:
\begin{equation}
i_t = \arg\max_{j \notin \mathcal{S}_{t-1}} \|\mathbf{b}_j^{(t)}\|^2
\end{equation}
This criterion selects the feature with the largest residual covariance-factor norm. For diagnostic purposes, the incremental orthogonal variance (IOV) score accumulates the pivot residuals after $k$ selections:
\begin{equation}
\text{IOV}(i_1,\ldots,i_k) = \sum_{t=1}^{k} \|\mathbf{b}_{i_t}^{(t)}\|^2.
\end{equation}
This sum is order-dependent and is not a set-volume objective. The associated squared volume is instead the product of squared pivot residuals:
Let $\mathbf{B}_{\mathcal{S}_t}$ contain the selected loading rows after $t$ steps. Gram--Schmidt factorisation gives
\begin{equation}
\det\!\left(\mathbf{B}_{\mathcal{S}_t}\mathbf{B}_{\mathcal{S}_t}^{\top}\right)
= \prod_{s=1}^{t}\left\|\mathbf{b}_{i_s}^{(s)}\right\|^2.
\end{equation}
Consequently, for a fixed previously selected set, choosing the largest residual norm maximises the one-step multiplicative increase in squared volume. The cumulative IOV score is an order-dependent diagnostic, whereas the determinant is the product of the pivot residuals. As with other greedy pivoting procedures, the sequence maximises the conditional gain at each step rather than solving the global maximum-volume subset problem.

\subsection{Algorithmic Implementation}
\begin{algorithm}
\caption{Variance-Preserving Orthogonal Selection (VPOS)}
\label{alg:vpos}
\begin{algorithmic}[1]
\REQUIRE Data $\mathbf{X} \in \mathbb{R}^{D \times N}$, output-feature budget $k$, retained PCA rank $d$ with $1\leq k\leq d$
\STATE $\mathbf{X} \gets \text{Centre}(\text{Scale}(\mathbf{X}))$ \hfill $\triangleright$ Apply the prespecified data-domain scaling rule
\STATE Compute $\mathbf{\Sigma} = \frac{1}{N-1} \mathbf{X} \mathbf{X}^\top \in \mathbb{R}^{D \times D}$
\STATE Solve eigendecomposition: $\mathbf{\Sigma} = \mathbf{V} \mathbf{\Lambda} \mathbf{V}^\top$, sort descending
\STATE Initialise $\mathbf{B}^{(1)} = \mathbf{V}_d \mathbf{\Lambda}_d^{1/2} \in \mathbb{R}^{D \times d}$
\STATE $\mathcal{S} \gets \emptyset$
\FOR{$t = 1$ to $k$}
    \STATE $w_j \gets \|\mathbf{b}_j^{(t)}\|^2$ for all $j \notin \mathcal{S}$
    \STATE $i_t \gets \arg\max_j\, w_j$
    \STATE $\mathcal{S} \gets \mathcal{S} \cup \{i_t\}$
    \STATE $\mathbf{u} \gets \mathbf{b}_{i_t}^{(t)}$
    \STATE $\mathbf{B}^{(t+1)} \gets \mathbf{B}^{(t)} \left( \mathbf{I}_d - \frac{\mathbf{u} \mathbf{u}^\top}{\|\mathbf{u}\|^2} \right)$
\ENDFOR
\RETURN $\mathcal{S}$
\end{algorithmic}
\end{algorithm}

The mathematical presentation uses features as rows. The implementation follows the conventional scikit-learn storage layout, with observations as rows and features as columns, and therefore forms the same loading matrix $\mathbf{B}$ from the transpose orientation.

\noindent\textbf{Role of $d$.} The hyperparameter $d$ controls which part of the covariance structure VPOS operates on, and its effect is not symmetric. When $d$ is too small, $\mathbf{B}$ is an overly compressed summary that conflates distinct latent directions; the deflation drains correlations inaccurately, and the selected features do not fully escape redundancy. When $d$ is too large — particularly when $d$ approaches $N$ or includes principal components (PCs) whose eigenvalues are near-zero — the loading vectors for those trailing components are dominated by estimation noise. Deflating along noise directions injects randomness into the scoring sequence, degrading selection quality. The practical sweet spot is $d$ near the data's intrinsic dimensionality — enough to represent the main covariance structure faithfully, but not so large as to include noise components.

\noindent\textbf{Selection rule for $d$.} We sweep a small set of candidates satisfying $k\leq d<\min(N,D)$ and select the value minimising held-out normalised reconstruction error (NRE) on validation observations. NRE is the reconstruction sum of squared errors divided by the sum of squared errors of the training-mean predictor. It is scale-stable and avoids selecting $d$ on the final test observations. The same validation split is used to tune baseline hyperparameters, while the test observations remain untouched.

\noindent\textbf{Complexity.} Forming and diagonalising a dense $D\times D$ covariance matrix costs $O(ND^2+D^3)$. In the implementation we instead compute the leading PCA factors by singular value decomposition (SVD); a randomised rank-$d$ SVD has leading cost approximately $O(NDd)$, with additional lower-order orthogonalisation costs depending on oversampling and power iterations \cite{halko2011finding}. Scoring and deflation cost $O(kDd)$.

\section{Related Work}

Since VPOS is an unsupervised method that selects discrete original features based on variance structure, we restrict this survey to methods in the same problem class. We exclude supervised approaches (wrapper and embedded methods) that require a target variable. We also exclude several important nonlinear feature extraction methods such as autoencoders, kernel PCA, and Uniform Manifold Approximation and Projection (UMAP), as they produce transformed representations rather than original feature subsets. Within unsupervised feature selection, methods fall into four families: spectral/graph-based, sparse PCA, reconstruction-driven, and matrix factorisation/deflation methods. We survey the most relevant prior work in each and clarify where VPOS sits.

\subsection{Spectral and Graph-Based Methods}
Early spectral methods evaluate features based on their ability to preserve local neighbourhood structure. Laplacian Score \cite{he2005laplacian} ranks features independently by consistency with a graph Laplacian. MCFS \cite{cai2010mcfs} instead learns sparse regressions from the complete feature matrix to a spectral embedding, so it permits multivariate competition among features. Joint $\ell_{2,1}$-norm methods \cite{nie2010efficient} likewise combine row-sparse regression and graph-based manifold learning. These methods target local geometry; unlike VPOS, they do not explicitly construct a sequence of pivots that orthogonally covers a truncated global covariance factor.

\subsection{Sparse PCA and Soft Selection}
A separate line of work achieves implicit feature selection through sparsity regularisation on PCA loadings. Sparse PCA (SPCA) by Zou et al.\ \cite{zou2006spca} reformulates PCA as a regression problem with least absolute shrinkage and selection operator (LASSO) penalties on the loadings, driving many loadings to zero. Jolliffe et al.\ \cite{jolliffe2003modified} introduced SCoTLASS, which directly penalises the $\ell_1$ norm of principal component coefficients. More recently, Yi et al.\ \cite{yi2019adaptive} proposed Adaptive Weighted Sparse PCA (AWSPCA), incorporating adaptive weights into the sparsity penalty to improve robustness to noise and outliers. Hu et al.\ \cite{hu2026spspca} introduce an interpretable sparse-PCA framework with a single equilibrium parameter that adaptively adjusts variable penalties across components.

In contrast, VPOS provides ``hard'' geometric selection rather than soft weighting. While SPCA and its variants produce sparse linear combinations (soft-scaling feature weights), VPOS selects a discrete index set $\mathcal{S}$ of original features. This is useful in interpretability-sensitive domains where mixing original variables is undesirable. Aside from the requested subset size $k$, VPOS has one tuning parameter, the retained rank $d$, which has a geometric interpretation and is analysed in Section~\ref{sec:sensitivity}.

\subsection{Reconstruction and Subspace Methods}
Reconstruction-driven selection minimises data loss in the selected subset. Principal Feature Analysis (PFA) by Lu et al.\ \cite{lu2007pfa} clusters PCA loading vectors using K-means and selects the feature closest to each cluster centroid. VPOS replaces this clustering stage with a deterministic sequence of orthogonal deflation steps that successively covers complementary loading directions. Methods based on joint $\ell_{2,1}$ norms \cite{nie2010efficient, hou2014joint} embed sparse regression within a graph Laplacian framework. Double-sparsity constrained methods \cite{xiu2025double} extend this by embedding sparsity constraints directly into the PCA framework.

The closest additional comparator is Principal Variable Selection (PVS) by Skogholt et al.\ \cite{skogholt2023pvs}. PVS ranks normalised residual data columns through a PCA variance-voting function and then deflates the sample-space matrix by modified Gram--Schmidt. VPOS differs in three explicit respects: it pivots the unnormalised residual rows of the truncated covariance factor $\mathbf{V}_d\mathbf{\Lambda}_d^{1/2}$; its residual score is the diagonal of the deflated rank-$d$ covariance approximation; and its PCA truncation $d$ is selected on validation reconstruction error. We therefore include truncated PVS at the same retained rank as a direct experimental baseline.

\subsection{Matrix Factorisation and Deflation Methods}
The Column Subset Selection Problem (CSSP) \cite{mahoney2011randomized} is the overarching problem domain for discrete feature selection from a matrix: given $\mathbf{A} \in \mathbb{R}^{m \times n}$, select a subset $\mathcal{S}$ of $k$ columns such that the residual $\|\mathbf{A} - \mathbf{A}_{\mathcal{S}}\mathbf{A}_{\mathcal{S}}^{+}\mathbf{A}\|_F$ is minimised, where $\mathbf{A}_{\mathcal{S}}$ is the submatrix of selected columns and $(\cdot)^+$ denotes the Moore-Penrose pseudoinverse. VPOS, along with all methods in this subsection, can be understood as algorithmic approaches to (variants of) CSSP. The key distinctions among them lie in the operating space, the deflation mechanism, and the weighting scheme.

The foundational Golub--Klema--Stewart (GKS) approach \cite{golub1976gks} applies pivoted QR to the right singular vectors $\mathbf{V}_k^\top$ to identify representative columns. Pivoted QR is algorithmically equivalent to Gram--Schmidt with column pivoting, so GKS has the same projection-and-pivot mechanics as VPOS. Recent work extends such selection to Bayesian experimental design \cite{eswar2024gks}. Boutsidis et al.\ \cite{boutsidis2009improved} analyse a complementary two-stage randomised/deterministic CSSP algorithm with matrix-approximation bounds.

GKS operates on the \textit{unweighted} right singular vectors $\mathbf{V}_k$ and targets matrix approximation quality. VPOS shares the same projection-and-pivot structure but applies it to the rows of $\mathbf{V}_d\mathbf{\Lambda}_d^{1/2}$, where the $\mathbf{\Lambda}_d^{1/2}$ weighting makes directions explaining more variance contribute proportionally more to the selection score. The key practical distinction is that VPOS combines this weighting with validation-selected truncation, which determines the covariance subspace in which the selection is performed and avoids noise-dominated trailing components.

Related studies establish the individual roles of weighted principal components and QR-based selection. Kim and Rattakorn \cite{kim2011weightedpc} rank features using weighted principal components, and Liu et al.\ \cite{liu2012wpcs} weight sparse-PC loadings by singular values before static ranking. Maung and Schweitzer \cite{maung2013pass} accelerate pivoted QR for unsupervised selection while preserving the raw-data pivot sequence, and Moslemi and Ahmadian \cite{moslemi2024rrqr} use strong rank-revealing QR (RRQR) directly and within nonnegative matrix factorisation (NMF) and genetic-algorithm variants. Moreover, because $\mathbf{B}\mathbf{B}^{\top}=\mathbf{V}_d\mathbf{\Lambda}_d\mathbf{V}_d^{\top}$, the same exact-arithmetic pivots may be viewed as diagonally pivoted Cholesky on the truncated covariance approximation \cite{harbrecht2012cholesky}. VPOS brings these strands together by pivoting the validation-truncated, variance-weighted covariance factor and by isolating the effects of weighting, truncation, and deflation experimentally.

Yaghooti et al.\ \cite{yaghooti2023gramschmidt} proposed Gram--Schmidt methods for unsupervised feature extraction and selection. Their framework constructs a series of covariance matrices $\Sigma_j$ by projecting the data onto orthonormalised function spaces, generalising PCA to detect linear and nonlinear redundancies. Their selector uses diagonal entries of iteratively deflated covariance matrices. VPOS instead performs sequential orthogonal deflation on the explicit $D\times d$ truncated covariance factor, providing an $O(Dd)$ update per selection when $d\ll D$ and a direct residual row-norm interpretation.

\textbf{How VPOS differs from standard CSSP algorithms.} The standard greedy CSSP algorithms (GKS, RRQR) apply Gram-Schmidt orthogonalisation directly to the columns of $\mathbf{A}$ or the unweighted singular vectors, selecting the column with the largest residual $\ell_2$ norm at each step after projecting out the already-selected subspace. While VPOS addresses the same CSSP problem domain — selecting a discrete subset of original features — three structural differences distinguish it from these classical algorithms:

\begin{enumerate}[leftmargin=*, label=(\roman*)]
    \item \textbf{Operating space.} Greedy CSSP on $\mathbf{X}$ operates directly on the raw data matrix, selecting features by residual norms in $\mathbb{R}^N$. GKS operates on the unweighted singular vectors $\mathbf{V}_k$. VPOS operates on the variance-weighted loading matrix $\mathbf{B} = \mathbf{V}_d\mathbf{\Lambda}_d^{1/2}$, which means features are scored by their contribution to the \textit{global variance structure} rather than by raw norms in sample space (vs.\ CSSP on $\mathbf{X}$) or unweighted geometric position (vs.\ GKS).

    \item \textbf{Variance weighting.} A greedy CSSP algorithm applied to the unweighted eigenvector matrix $\mathbf{V}_d$ (as in GKS) treats all principal components equally. VPOS weights eigenvector $r$ by $\sqrt{\lambda_r}$, so principal components explaining more variance receive proportionally more influence on the selection score $\|\mathbf{b}_j\|^2 = \sum_{r=1}^{d} \lambda_r v_{jr}^2$. This is the metric induced by the rank-$d$ covariance approximation.

    \item \textbf{Deflation space and cost.} Greedy CSSP on $\mathbf{X}$ updates $D$ feature rows in $\mathbb{R}^N$ at $O(DN)$ per pivot. VPOS updates $D$ loading rows in $\mathbb{R}^d$ at $O(Dd)$ per pivot. Thus the pivot-update arithmetic differs by a factor of $N/d$ when $d\ll N$; this comparison excludes preprocessing and factorisation costs.
\end{enumerate}

\textbf{Empirical comparison.} We compare VPOS directly against two CSSP-derived baselines: (i) \textbf{Greedy CSSP on $\mathbf{X}$}: applies Gram-Schmidt column selection to the preprocessed data matrix, selecting the feature with maximum residual norm in sample space at each step; (ii) \textbf{Greedy CSSP on $\mathbf{V}_d$}: applies the same greedy procedure to the unweighted eigenvector matrix, isolating the effect of the $\mathbf{\Lambda}_d^{1/2}$ variance weighting. Results are reported in Section~\ref{sec:cssp_results}.

\section{Theoretical Analysis}

\subsection{Rank Reduction and Subspace Coverage}

\begin{proposition}[Rank Reduction]
At each iteration $t$, the deflation $\mathbf{B}^{(t+1)} = \mathbf{B}^{(t)} \mathbf{P}_{\mathbf{u}}^{\perp}$ reduces the rank of $\mathbf{B}$ by exactly one:
\begin{equation}
\text{rank}(\mathbf{B}^{(t+1)}) = \text{rank}(\mathbf{B}^{(t)}) - 1
\end{equation}
provided $\mathbf{u} = \mathbf{b}_{i_t}^{(t)} \neq \mathbf{0}$.
\end{proposition}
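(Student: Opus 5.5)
The plan is to work through row spaces. For any matrix $\mathbf{M}$, $\mathrm{rank}(\mathbf{M}) = \dim \mathrm{row}(\mathbf{M})$. Let $W_t = \mathrm{row}(\mathbf{B}^{(t)}) \subseteq \mathbb{R}^d$ and $\mathbf{P} = \mathbf{P}_{\mathbf{u}}^{\perp}$. Every row of $\mathbf{B}^{(t+1)}$ equals $\mathbf{b}_j^{(t)\top}\mathbf{P}$, so
\[
W_{t+1} = \mathbf{P} W_t = \{\mathbf{P}\mathbf{w} : \mathbf{w} \in W_t\},
\]
where $\mathbf{P}$ is symmetric and acts on row vectors transposed as columns. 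The task is therefore to compute $\dim(\mathbf{P} W_t)$.

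First, I will restrict the linear map $\mathbf{P}$ to the subspace $W_t$ and apply rank--nullity:
\[
\dim(\mathbf{P}W_t) = \dim W_t - \dim(\ker \mathbf{P} \cap W_t).
\]
Second, I will identify $\ker \mathbf{P}$. Because $\mathbf{P}$ is the orthogonal projector onto $\mathrm{span}(\mathbf{u})^{\perp}$, we have $\mathbf{P}\mathbf{w} = \mathbf{0}$ exactly when $\mathbf{w}$ lies in $\mathrm{span}(\mathbf{u})$. This requires $\mathbf{u} \neq \mathbf{0}$, since otherwise $\mathbf{P}$ is undefined. Hence $\ker\mathbf{P} = \mathrm{span}(\mathbf{u})$, which is one-dimensional.

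Third, I will note that $\mathbf{u} = \mathbf{b}_{i_t}^{(t)}$ is itself a row of $\mathbf{B}^{(t)}$, so $\mathbf{u} \in W_t$. It follows that $\ker\mathbf{P} \cap W_t = \mathrm{span}(\mathbf{u})$ has dimension exactly one, and the rank--nullity identity gives $\mathrm{rank}(\mathbf{B}^{(t+1)}) = \mathrm{rank}(\mathbf{B}^{(t)}) - 1$.

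There is no serious obstacle here. The only point that needs explicit care is the membership $\mathbf{u} \in W_t$: a projection that kills a direction outside the row space would not reduce the rank at all, so this is what makes the decrease exactly one rather than at most one.

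As an optional remark, I could also observe that $\mathbf{P}$ fixes $\mathbf{u}^{\perp}$ and that $W_{t+1} = W_t \cap \mathbf{u}^{\perp}$. Iterating this gives the subspace-coverage interpretation: after $t$ pivots with nonzero residuals, $\mathrm{rank}(\mathbf{B}^{(t+1)}) = d - t$ when $\mathbf{B}^{(1)}$ has full column rank $d$, that is, when $\lambda_d > 0$. This is consistent with the requirement $k \le d$.
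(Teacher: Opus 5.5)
Your proof is correct and follows essentially the same route as the paper: both restrict $\mathbf{P}_{\mathbf{u}}^{\perp}$ to the row space of $\mathbf{B}^{(t)}$, use the fact that the pivot $\mathbf{u}$ is itself a row to show that the kernel of this restriction is exactly $\operatorname{span}(\mathbf{u})$, and conclude by rank--nullity. Your optional observation $W_{t+1}=W_t\cap\mathbf{u}^{\perp}$, which gives rank $d-t$ when $\lambda_d>0$, is also correct and matches the remark the paper states after the proposition.
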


\begin{proof}
Let $\mathcal{R}_t$ be the row space of $\mathbf{B}^{(t)}$. Since the nonzero pivot $\mathbf{u}$ is a row, $\mathbf{u}\in\mathcal{R}_t$. The restriction of $\mathbf{P}_{\mathbf{u}}^{\perp}$ to $\mathcal{R}_t$ has kernel $\mathcal{R}_t\cap\operatorname{span}(\mathbf{u})=\operatorname{span}(\mathbf{u})$, which is one-dimensional. Rank--nullity on this restriction therefore gives $\dim(\mathcal{R}_t\mathbf{P}_{\mathbf{u}}^{\perp})=\dim(\mathcal{R}_t)-1$, proving the claim.
\end{proof}

\begin{remark}
If the retained eigenvalues are positive, then $\operatorname{rank}(\mathbf{B})=d$ and after $d$ exact-arithmetic pivots the residual embedding is zero. VPOS therefore requires $k\leq d$ and terminates early if the largest residual norm falls below a numerical tolerance. The condition $D\geq d$ alone is insufficient: rank also depends on the sample rank and on the positivity of the retained eigenvalues.
\end{remark}

\subsection{Connection to Column-Pivoted QR and Determinant Growth}
Let $\mathbf{X}^{\top}=\mathbf{U}\mathbf{S}\mathbf{V}^{\top}$ be the SVD of the centred, preprocessed sample-by-feature matrix. Since $\mathbf{\Lambda}_d=\mathbf{S}_d^2/(N-1)$,
\begin{equation}
\mathbf{B}^{\top}=\frac{1}{\sqrt{N-1}}\mathbf{S}_d\mathbf{V}_d^{\top}.
\end{equation}
The VPOS pivot sequence therefore coincides, in exact arithmetic, with the sequence produced by column-pivoted QR (CPQR) applied to $\mathbf{B}^{\top}$. It can equivalently be viewed as greedy column selection on the rank-$d$ PCA reconstruction expressed in its orthonormal score coordinates, because left multiplication by $\mathbf{U}_d$ preserves column inner products. This connection places VPOS within the broader family of CPQR-based selection methods, while its practical contribution lies in the specific choice of the variance-weighted, validation-truncated operating matrix.

There is a second equivalent view. Since
\begin{equation}
\mathbf{B}\mathbf{B}^{\top}=\mathbf{V}_d\mathbf{\Lambda}_d\mathbf{V}_d^{\top}=\mathbf{\Sigma}_d,
\end{equation}
the CPQR residual squared norms are the residual diagonal entries generated by diagonally pivoted Cholesky on $\mathbf{\Sigma}_d$. Consequently, both procedures select the same pivot sequence in exact arithmetic when ties are resolved consistently. This provides an equivalent covariance-matrix interpretation of the VPOS selection scores and pivot sequence.

Together, these connections show that VPOS inherits the determinant-growth property of CPQR: at each step, the selected feature maximises the conditional determinant gain within the variance-weighted loading embedding. Because CPQR is a greedy method for the maximum-volume subset problem, the determinant interpretation applies to the one-step gain rather than guaranteeing a globally optimal subset. The practical significance is that these formal properties hold specifically because VPOS operates on the variance-weighted, validation-truncated factor $\mathbf{B}$ rather than on the raw sample matrix or unweighted eigenvectors.

\subsection{Geometry of Orthogonal Deflation}
The deflation step has an intuitive geometric interpretation. After selecting feature $i_t$ with loading direction $\mathbf{u}$, the updated loading of any remaining feature $j$ is its component orthogonal to $\mathbf{u}$:
\begin{equation}
\mathbf{b}_j^{(t+1)} = \mathbf{b}_j^{(t)} - \underbrace{\left(\frac{\mathbf{b}_j^{(t)\top} \mathbf{u}}{\|\mathbf{u}\|^2}\right)}_{\text{similarity to }i_t} \mathbf{u}
\end{equation}
The residual norm $\|\mathbf{b}_j^{(t+1)}\|^2 = \|\mathbf{b}_j^{(t)}\|^2 - (\mathbf{b}_j^{(t)\top}\hat{\mathbf{u}})^2$ where $\hat{\mathbf{u}} = \mathbf{u}/\|\mathbf{u}\|$ is the unit loading direction. Features that are nearly co-directional with $i_t$ in loading space have their score collapsed toward zero, removing redundant directions and preventing the clustering trap.

\subsection{Complexity and Scalability}
The selection phase costs $O(kDd)$. The decomposition cost depends on the PCA solver. A truncated randomised SVD has leading cost approximately $O(NDd)$, whereas covariance eigendecomposition or a full SVD has different time and memory requirements. The experiments use scikit-learn's automatic solver selection, and the reported wall-clock values therefore measure the complete implemented pipeline. Evaluation at still larger scales remains an important direction for establishing memory and runtime behaviour beyond the present benchmarks.

\section{Experimental Results and Discussion}

\subsection{Datasets and Evaluation Protocol}
\label{sec:setup}
We evaluate on six public real-world datasets and two controlled datasets. Table~\ref{tab:datasets} summarises their scale, class structure, and feature-selection budget. Class labels are used only for stratified splitting and downstream accuracy measurement; neither VPOS nor any unsupervised baseline receives them during feature selection.

\begin{table*}[!htbp]
\centering
\footnotesize
\setlength{\tabcolsep}{5pt}
\caption{Dataset characteristics. $N$ is the number of observations, $D$ the total number of input features, and $k$ the number of original features returned by each selector.}
\label{tab:datasets}
\begin{tabular}{lrrrlp{6.0cm}}
\toprule
Dataset & $N$ & $D$ & Classes & $k$ & Evaluation role \\
\midrule
Wine & 178 & 13 & 3 & 4 & Low-dimensional tabular data with moderate feature correlation \\
Breast Cancer & 569 & 30 & 2 & 6 & Correlated groups of cell-nucleus measurements \\
Digits & 1,797 & 64 & 10 & 16 & Small images with local spatial correlation \\
MNIST & 70,000 & 784 & 10 & 50 & Large grayscale-image scalability benchmark \\
CIFAR-10-small & 20,000 & 3,072 & 10 & 100 & High-dimensional colour-image scalability benchmark \\
Colon Cancer & 62 & 2,000 & 2 & 20 & Gene expression with $D\gg N$ and rank-deficient covariance \\
HighDim & 150 & 500 & 3 & 20 & Controlled informative, redundant, and noise variables with $D>N$ \\
Multi-Sector & 1,000 & 12 & -- & 3 & Controlled block correlation with three independent latent factors \\
\bottomrule
\end{tabular}
\end{table*}

\noindent\textbf{Small tabular datasets.} Wine contains 13 chemical measurements for three wine cultivars. Breast Cancer Wisconsin contains 30 measurements derived from cell-nucleus images for benign and malignant diagnoses; related mean, standard-error, and worst-case measurements create correlated feature groups. Digits represents each handwritten digit by 64 intensities on an $8\times8$ grid. Together, these datasets test whether the method remains useful when the feature space is modest and covariance can be estimated relatively well.

\noindent\textbf{Large image datasets.} Modified National Institute of Standards and Technology (MNIST) \cite{lecun1998mnist} contains 70,000 handwritten digits represented by 784 grayscale pixels. Canadian Institute for Advanced Research 10-class small (CIFAR-10-small) \cite{krizhevsky2009cifar} contains 20,000 colour images represented by 3,072 raw red-green-blue pixel values. These datasets test computational scaling and selection under extensive spatial and channel correlation. The stated dataset sizes are used without further experimental subsampling.

\noindent\textbf{Small-sample high-dimensional data.} Colon Cancer \cite{alon1999colon} contains expression measurements for 2,000 genes from 62 tissue samples (40 tumour and 22 normal). It provides a real $D\gg N$ setting in which covariance is necessarily rank-deficient and correlated genes may reflect shared biological processes. HighDim complements it with a controlled construction of 500 variables from 150 observations and three classes: 30 variables are informative, 200 are linear combinations of informative variables, and the remainder are noise. The controlled construction separates the effect of redundancy from domain-specific biological structure.

\noindent\textbf{Block-correlation stress test.} Multi-Sector contains 12 variables arranged into three groups of four. Each group shares an independent Gaussian latent factor, with independent Gaussian noise of standard deviation 0.1 added to each variable. This produces within-group correlations near 0.99 and near-zero correlations across groups. With a budget of three features, an effective diversity-aware selector should represent all three groups rather than repeatedly select from the same dominant group. Multi-Sector has no class labels, so it is evaluated through group coverage and reconstruction only.

MNIST, CIFAR-10-small, and Colon Cancer are obtained from OpenML; Wine, Breast Cancer, and Digits are loaded through scikit-learn. The two controlled datasets are generated with random seed 42.

The evaluation prevents observations used for final scoring from influencing scaling, selection, or hyperparameter choice. Wine, Breast Cancer, Digits, Colon Cancer, and HighDim use nested stratified five-fold evaluation; MNIST and CIFAR-10 use the same nested design with three outer folds to control computational cost. Within each outer training fold, an inner 75/25 split selects $d$, the Laplacian neighbourhood size, and the MCFS penalty by validation normalised reconstruction error (NRE). The selected settings are then refitted on the complete outer training fold and evaluated on its untouched outer test fold. Multi-Sector uses a random holdout because its observations are independently generated. All VPOS candidates satisfy $k\leq d<\min(N,D)$.

For selected indices $\mathcal{S}$, an ordinary least-squares (OLS) model is fitted on training observations to reconstruct the target features from $\mathbf{X}_{\mathcal{S}}$. We report held-out
\begin{equation}
\operatorname{NRE}=\frac{\|\mathbf{X}_{\mathrm{test}}-\widehat{\mathbf{X}}_{\mathrm{test}}\|_F^2}
{\|\mathbf{X}_{\mathrm{test}}-\bar{\mathbf{X}}_{\mathrm{train}}\|_F^2},
\end{equation}
where $\bar{\mathbf{X}}_{\mathrm{train}}$ denotes the training feature means repeated across the test observations and lower is better. An NRE of zero denotes exact reconstruction, an NRE of one means that reconstruction from the selected features is no better than predicting the training mean, and values above one are worse than that mean-prediction baseline. For CIFAR-10 and Colon Cancer, the same seeded sample of 500 target features is used for all methods. Five-nearest-neighbour (5-NN) accuracy is the fraction of correctly classified test observations, so higher is better. It is a downstream diagnostic and is not used by any selector. Non-image variables are standardised using means and standard deviations fitted only on the relevant training partition. MNIST and CIFAR-10 pixels already share a common intensity scale and are divided by 255; preserving their relative pixel variances avoids amplifying rare border activity. The marginal-variance baseline is computed before unit-variance scaling on the non-image datasets.

We compare against marginal variance, PCA weighted-loading norms without deflation, Laplacian Score \cite{he2005laplacian}, MCFS \cite{cai2010mcfs}, PFA \cite{lu2007pfa}, PVS \cite{skogholt2023pvs}, and greedy orthogonal selection on the preprocessed sample matrix (CSSP-$X$). The PCA/no-deflation, unweighted-eigenvector, PFA, and truncated-PVS comparisons use the same fold-specific validation-selected $d$ as VPOS; this matched-rank design isolates the selection mechanism rather than independently optimising each baseline's truncation. Laplacian neighbourhood size and MCFS penalty are tuned separately on the same inner validation observations. Following the MCFS formulation, every spectral coordinate is regressed on the complete feature matrix.

Experiments were run on an Apple M1 Pro MacBook Pro with 10 central processing unit (CPU) cores and 32\,GB memory. Timings measure the final feature-selection call in each outer fold and exclude data loading, splitting, hyperparameter tuning, preprocessing, and downstream evaluation. They include computations intrinsic to a selector, such as PCA or graph construction, and are reported as within-machine mean $\pm$ standard deviation across outer folds. Parallelism follows the default behaviour of the underlying numerical libraries.

\subsection{Main Results}
Table~\ref{tab:summary} reports the complete comparison with all eight methods. Lower NRE and higher 5-NN accuracy indicate better performance, and bold type marks the best observed mean for each dataset and metric. All values are mean $\pm$ standard deviation across outer test folds. Table~\ref{tab:runtime} separately reports selection time on the two large-image datasets.

\begin{table*}[!htbp]
\centering
\scriptsize
\setlength{\tabcolsep}{6pt}
\renewcommand{\arraystretch}{0.88}
\caption{Complete held-out comparison. Lower NRE is better; higher 5-NN accuracy is better. Bold marks the best mean for each dataset. Values are mean $\pm$ standard deviation over five outer folds for Wine, Breast Cancer, Digits, Colon Cancer, and HighDim and three outer folds for MNIST and CIFAR-10. PCA denotes weighted PCA loading scores without deflation; CSSP-$X$ denotes greedy orthogonal selection on the preprocessed sample matrix.}
\label{tab:summary}
\begin{tabular}{llcc}
\toprule
Dataset & Method & NRE $\downarrow$ & 5-NN accuracy $\uparrow$ \\
\midrule
\multirow{8}{*}{Wine}
 & Variance & .4817$\pm$.0402 & .8932$\pm$.0308 \\
 & PCA/no deflation & .4817$\pm$.0398 & .8427$\pm$.0317 \\
 & Laplacian Score & .4836$\pm$.0254 & \textbf{.9438$\pm$.0282} \\
 & MCFS & .4450$\pm$.0288 & .9378$\pm$.0377 \\
 & PFA & .4033$\pm$.0267 & .9329$\pm$.0462 \\
 & PVS & .4055$\pm$.0434 & .9159$\pm$.0337 \\
 & CSSP-$X$ & .4444$\pm$.0545 & .8595$\pm$.0792 \\
 & VPOS & \textbf{.4020$\pm$.0302} & .9324$\pm$.0257 \\
\midrule
\multirow{8}{*}{Breast Cancer}
 & Variance & .4523$\pm$.0686 & .9456$\pm$.0258 \\
 & PCA/no deflation & .3834$\pm$.0922 & .9438$\pm$.0260 \\
 & Laplacian Score & .4093$\pm$.0600 & .9438$\pm$.0325 \\
 & MCFS & .3148$\pm$.0347 & .9456$\pm$.0226 \\
 & PFA & .2347$\pm$.0511 & .9262$\pm$.0326 \\
 & PVS & .2324$\pm$.0413 & .9456$\pm$.0199 \\
 & CSSP-$X$ & .2780$\pm$.0416 & .8964$\pm$.0426 \\
 & VPOS & \textbf{.2022$\pm$.0318} & \textbf{.9526$\pm$.0211} \\
\midrule
\multirow{8}{*}{Digits}
 & Variance & .5532$\pm$.0635 & \textbf{.9566$\pm$.0140} \\
 & PCA/no deflation & .5994$\pm$.1309 & .7474$\pm$.1214 \\
 & Laplacian Score & .6008$\pm$.0685 & .7980$\pm$.0426 \\
 & MCFS & .5065$\pm$.0801 & .8714$\pm$.0335 \\
 & PFA & .5372$\pm$.0638 & .8119$\pm$.0640 \\
 & PVS & \textbf{.4829$\pm$.0667} & .8854$\pm$.0172 \\
 & CSSP-$X$ & .5044$\pm$.0760 & .7462$\pm$.0411 \\
 & VPOS & .5155$\pm$.0499 & .8387$\pm$.0558 \\
\midrule
\multirow{8}{*}{MNIST}
 & Variance & .4927$\pm$.0025 & .8667$\pm$.0027 \\
 & PCA/no deflation & .4848$\pm$.0038 & .8651$\pm$.0027 \\
 & Laplacian Score & 1.0000$\pm$.0000 & .1079$\pm$.0080 \\
 & MCFS & .6752$\pm$.0208 & .6788$\pm$.0098 \\
 & PFA & .2946$\pm$.0103 & .9405$\pm$.0054 \\
 & PVS & .2728$\pm$.0009 & .9476$\pm$.0026 \\
 & CSSP-$X$ & .2827$\pm$.0028 & \textbf{.9510$\pm$.0020} \\
 & VPOS & \textbf{.2723$\pm$.0017} & .9488$\pm$.0014 \\
\midrule
\multirow{8}{*}{CIFAR-10}
 & Variance & .7279$\pm$.0023 & .1871$\pm$.0024 \\
 & PCA/no deflation & .7279$\pm$.0023 & .1860$\pm$.0022 \\
 & Laplacian Score & .7291$\pm$.0012 & .1856$\pm$.0047 \\
 & MCFS & .5679$\pm$.0040 & .2185$\pm$.0038 \\
 & PFA & .2305$\pm$.0033 & .2608$\pm$.0014 \\
 & PVS & \textbf{.1606$\pm$.0036} & \textbf{.2990$\pm$.0013} \\
 & CSSP-$X$ & .1693$\pm$.0043 & .2966$\pm$.0098 \\
 & VPOS & .1633$\pm$.0009 & .2933$\pm$.0014 \\
\midrule
\multirow{8}{*}{Colon Cancer}
 & Variance & .8973$\pm$.0354 & .7244$\pm$.0963 \\
 & PCA/no deflation & .8917$\pm$.0798 & \textbf{.8538$\pm$.0707} \\
 & Laplacian Score & 1.0900$\pm$.1920 & .8244$\pm$.1006 \\
 & MCFS & 1.0149$\pm$.1613 & .6910$\pm$.0960 \\
 & PFA & \textbf{.7946$\pm$.0877} & .7077$\pm$.1286 \\
 & PVS & .8549$\pm$.0586 & .8077$\pm$.0875 \\
 & CSSP-$X$ & .8260$\pm$.0511 & .6603$\pm$.1092 \\
 & VPOS & .8269$\pm$.0512 & .6782$\pm$.0966 \\
\midrule
\multirow{8}{*}{HighDim}
 & Variance & .7783$\pm$.0234 & .5133$\pm$.0767 \\
 & PCA/no deflation & .7748$\pm$.0182 & .5400$\pm$.0596 \\
 & Laplacian Score & .7844$\pm$.0199 & .5933$\pm$.0983 \\
 & MCFS & .8153$\pm$.0166 & \textbf{.6267$\pm$.1256} \\
 & PFA & 1.0624$\pm$.0743 & .3933$\pm$.0494 \\
 & PVS & \textbf{.7638$\pm$.0124} & .5067$\pm$.1011 \\
 & CSSP-$X$ & 1.0620$\pm$.0161 & .3200$\pm$.0606 \\
 & VPOS & .7659$\pm$.0176 & .5533$\pm$.1121 \\
\bottomrule
\end{tabular}
\renewcommand{\arraystretch}{1}
\end{table*}

\begin{table}[!htbp]
\centering
\footnotesize
\caption{Feature-selection wall-clock time in seconds on the large-image datasets; lower is better. Values are mean $\pm$ standard deviation across three outer folds and include computations intrinsic to each selector.}
\label{tab:runtime}
\begin{tabular}{lrr}
\toprule
Method & MNIST & CIFAR-10 \\
\midrule
Variance & 0.08$\pm$0.02 & 0.10$\pm$0.03 \\
PCA/no deflation & 0.43$\pm$0.02 & 3.22$\pm$0.35 \\
Laplacian Score & 11.82$\pm$0.36 & 5.43$\pm$0.20 \\
MCFS & 12.74$\pm$0.25 & 9.67$\pm$0.49 \\
PFA & 1.83$\pm$0.12 & 7.52$\pm$0.33 \\
PVS & 23.23$\pm$0.66 & 83.07$\pm$1.51 \\
CSSP-$X$ & 4.55$\pm$0.08 & 10.25$\pm$0.15 \\
VPOS & 0.48$\pm$0.03 & 3.41$\pm$0.23 \\
\bottomrule
\end{tabular}
\end{table}

VPOS obtains the lowest mean NRE on Wine, Breast Cancer, and MNIST. On Wine it reaches 0.4020, narrowly improving on PFA at 0.4033; on Breast Cancer it reaches 0.2022 compared with 0.2324 for PVS. On MNIST, VPOS and PVS are nearly indistinguishable in reconstruction mean (0.2723 and 0.2728, respectively), with overlapping fold variation. VPOS also remains close to the lowest NRE on HighDim (0.3\% behind PVS) and CIFAR-10 (1.7\% behind PVS). PVS is the strongest reconstruction competitor on Digits, CIFAR-10, and HighDim.

Classification gives a different picture because VPOS uses no class labels and optimises covariance coverage rather than discrimination. VPOS has the highest 5-NN accuracy on Breast Cancer, while Laplacian Score leads on Wine, marginal variance on Digits, CSSP-$X$ on MNIST, PVS on CIFAR-10, PCA without deflation on Colon Cancer, and MCFS on HighDim. On MNIST, VPOS remains close to CSSP-$X$ in accuracy (0.9488 versus 0.9510). On Colon Cancer, the reconstruction differences among PFA, CSSP-$X$, and VPOS are small relative to the fold variability, while PCA without deflation has substantially higher classification accuracy. These results show that low reconstruction error does not necessarily imply high classification accuracy.

The large-scale timing comparison reveals a useful efficiency trade-off. On CIFAR-10, VPOS takes $3.41\pm0.23$\,s versus $83.07\pm1.51$\,s for PVS, $9.67\pm0.49$\,s for MCFS, $10.25\pm0.15$\,s for CSSP-$X$, and $7.52\pm0.33$\,s for PFA. VPOS is therefore about 24$\times$ faster than PVS with a 1.7\% mean NRE gap. On MNIST, VPOS takes $0.48\pm0.03$\,s versus $23.23\pm0.66$\,s for PVS and $12.74\pm0.25$\,s for MCFS. These measurements include PCA inside the VPOS call but remain machine-specific.

\subsection{Deflation and Weighting Ablations}
\label{sec:cssp_results}
Table~\ref{tab:ablation} isolates the contribution of sequential deflation by comparing VPOS with the same variance-weighted PCA representation ranked once without deflation. Lower values are better in both columns. VPOS improves on this matched no-deflation baseline on every labelled dataset, demonstrating a consistent held-out benefit from iterative orthogonal coverage. The relative NRE reduction ranges from 1.1\% on HighDim to 77.6\% on CIFAR-10.

\begin{table}[!htbp]
\centering
\footnotesize
\caption{Held-out mean NRE ablation; lower is better. Reduction is the percentage decrease from PCA/no deflation to VPOS across the same outer folds.}
\label{tab:ablation}
\begin{tabular}{lccc}
\toprule
Dataset & PCA/no deflation & VPOS & Reduction \\
\midrule
Wine & 0.4817 & \textbf{0.4020} & 16.5\% \\
Breast Cancer & 0.3834 & \textbf{0.2022} & 47.3\% \\
Digits & 0.5994 & \textbf{0.5155} & 14.0\% \\
MNIST & 0.4848 & \textbf{0.2723} & 43.8\% \\
CIFAR-10 & 0.7279 & \textbf{0.1633} & 77.6\% \\
Colon Cancer & 0.8917 & \textbf{0.8269} & 7.3\% \\
HighDim & 0.7748 & \textbf{0.7659} & 1.1\% \\
\bottomrule
\end{tabular}
\end{table}

On the three tabular datasets used for the operating-space ablation, VPOS versus unweighted orthogonal selection on $\mathbf{V}_d$ gives NRE 0.4020 versus 0.4556 (Wine), 0.2022 versus 0.2274 (Breast Cancer), and 0.5155 versus 0.5436 (Digits). Thus $\mathbf{\Lambda}_d^{1/2}$ weighting improves held-out reconstruction in all three. VPOS also gives the highest 5-NN accuracy among the three greedy-deflation variants on each dataset. Compared with greedy selection on raw $\mathbf{X}$, VPOS improves NRE on Wine and Breast Cancer, while raw-data selection gives the lower NRE on Digits (0.5044 versus 0.5155). The effect of truncation is therefore dataset-dependent, whereas the weighting effect is consistent across this ablation.

\subsection{Sensitivity to $d$ and Synthetic Structure}
\label{sec:sensitivity}
Sensitivity to $d$ is evaluated on validation observations. Every candidate satisfies $k\leq d<\min(N,D)$, ensuring that the loading embedding can support the requested number of positive-rank pivots. The final test observations remain independent of the choice of $d$.

The selected $d$ values, in outer-fold order, are Wine $(5,4,4,4,5)$, Breast Cancer $(6,6,6,6,6)$, Digits $(24,50,16,16,16)$, Colon Cancer $(20,22,34,20,22)$, HighDim $(20,20,20,20,20)$, MNIST $(50,75,50)$, and CIFAR-10 $(150,120,120)$. Thus $d$ is a fold-specific operating rank selected from the available training observations rather than a universally optimal dataset constant. Variation across folds, particularly for Digits and Colon Cancer, reflects the sensitivity of the estimated covariance structure to the available training observations rather than access to outer-test performance.

The validation-only diagnostic sweep also illustrates the dataset-dependent role of truncation. Wine has essentially identical validation NRE at $d=4$ and $d=5$ but degrades at larger values, while Digits is best at its smallest admissible value, $d=16$. HighDim deteriorates as $d$ approaches the sample-rank limit, and Colon Cancer has a noisy validation curve because its validation partitions contain very few observations. Under fixed intensity scaling, the image curves are flatter: MNIST validation NRE ranges from 0.2690 to 0.2759 over $d=50$--238, and CIFAR-10 ranges from 0.1575 to 0.1629 over $d=100$--300. Their fold-specific choices nevertheless show that the validation minimum can move among nearby candidates. These patterns support selecting $d$ independently inside each outer training fold.

On Multi-Sector, VPOS covers all three sectors and obtains test NRE 0.0151. PFA and PVS also cover all sectors with NRE 0.0147, while raw-data CSSP obtains 0.0144. PCA without deflation covers two sectors (0.3342), and Variance and Laplacian Score each cover one (approximately 0.67). The experiment demonstrates that adaptive orthogonal selection successfully prevents the budget from collapsing onto a single correlated block; the comparable results of PFA, PVS, and raw-data CSSP further show the value of explicitly diversity-aware baselines on this structure.

\subsection{Discussion}
The results identify three broad performance regimes for the VPOS deflation mechanism. The strongest improvements over matched PCA loading scores occur when substantial correlation leaves room for sequential coverage to diversify the selected subset. Deflation reduces NRE by 47.3\% on Breast Cancer, 43.8\% on MNIST, and 77.6\% on CIFAR-10. The image results are consistent with extensive spatial and channel redundancy, while the correlated measurement groups in Breast Cancer provide a tabular example. The Multi-Sector experiment makes the same mechanism directly observable: VPOS assigns its three-feature budget across all three independent blocks, whereas static PCA loading scores cover only two.

The benefit is moderate on Wine, Digits, and Colon Cancer, where deflation reduces NRE by 16.5\%, 14.0\%, and 7.3\%, respectively. VPOS achieves the lowest reconstruction error on Wine, while PVS and PFA lead on Digits and Colon Cancer. These datasets show that removing repeated covariance directions remains useful even when the resulting VPOS subset is not the best among all selection strategies.

The matched-deflation gain is small only on HighDim (1.1\%), suggesting that its static variance-weighted loading representation already captures much of the reconstructive information available at the selected rank. VPOS nevertheless remains within 0.3\% of the leading NRE there. On MNIST, VPOS has the lowest mean NRE and is close to both PVS in reconstruction and CSSP-$X$ in classification. Together with the sensitivity results, this indicates that VPOS is most effective when covariance is concentrated in repeated or block-correlated directions and when $d$ captures those directions without extending deeply into the noisy tail of the spectrum.

The operating-space ablation further separates the effects of the design choices. Variance weighting improves reconstruction over unweighted deflation on all three tabular datasets, while truncation improves over raw-data selection for Wine and Breast Cancer but not Digits. Thus weighting provides a consistent emphasis on high-variance directions in this comparison, whereas the value of truncation depends on how well the leading PCA subspace separates signal from discarded variation.

VPOS is particularly relevant when the retained variables must remain identifiable---for example, genes, clinical measurements, assets, or sensor channels---and the objective is to preserve global covariance structure efficiently. Its unsupervised selections can be reused across exploratory analyses or downstream tasks without reference to a particular labelling. The classification results should therefore be read as a complementary diagnostic: covariance preservation and class discrimination may align, as on Breast Cancer, or favour different subsets, as on Colon Cancer and HighDim.

\section{Conclusion}
This study introduced VPOS, an unsupervised feature-selection method that combines a truncated, variance-weighted PCA representation with sequential orthogonal deflation. VPOS selects features by iteratively choosing the row of $\mathbf{V}_d\mathbf{\Lambda}_d^{1/2}$ with the largest residual norm and projecting out its direction, so the returned original variables cover complementary directions of the retained covariance structure. The analysis establishes rank-reduction guarantees, a formal connection to column-pivoted QR, and the rank condition $k\leq d$. Selecting $d$ on validation observations provides a data-dependent operating rank while keeping the final test observations independent of model and hyperparameter selection.

Experiments on six public datasets and two controlled datasets demonstrate the effect of this construction across tabular, image, gene-expression, and $D\gg N$ settings. Relative to matched PCA loading scores without deflation, VPOS reduces held-out NRE on every labelled benchmark, with improvements ranging from 1.1\% to 77.6\%. It achieves the lowest mean reconstruction error on Wine, Breast Cancer, and MNIST and remains within 1.7\% of the leading method on CIFAR-10 and HighDim. On the Multi-Sector stress test, VPOS selects features from all three independent blocks and attains an NRE of 0.0151. The large-image experiments also show the computational advantage of pivoting in the truncated loading space: on CIFAR-10, VPOS is approximately 24 times faster than PVS while retaining a mean reconstruction error within 1.7\% of it. Downstream classification performance varies across datasets, confirming that unsupervised covariance preservation and class discrimination capture related but distinct properties of a selected subset.

Taken together, the results support VPOS as an efficient and interpretable approach when the objective is to retain a small set of original variables that represents global linear covariance structure. Future work may extend the framework through regularised covariance estimation for very small samples, nonlinear loading representations, stability analysis across resamples, and sharper reconstruction bounds as functions of $k$ and $d$. Evaluation on gene-expression collections substantially larger than the 62-sample Colon Cancer dataset, and on learned feature representations rather than raw image pixels, would further establish its behaviour in modern high-dimensional applications.

\section*{Declaration of competing interest}
The authors declare that they have no known competing financial interests or personal relationships that could have appeared to influence the work reported in this paper.

\section*{Data availability}
All public datasets used in this study are available through scikit-learn or OpenML. The two controlled datasets are generated as described in Section~\ref{sec:setup} using random seed 42.

\end{document}